\documentclass[preprint]{article}

\PassOptionsToPackage{numbers}{natbib}
\usepackage{neurips_2026}
\usepackage[utf8]{inputenc}
\usepackage[T1]{fontenc}
\usepackage{hyperref}
\usepackage{url}
\usepackage{microtype}
\usepackage{xcolor}
\usepackage{amsmath,amssymb,amsfonts}
\usepackage{amsthm}
\usepackage{booktabs}
\usepackage{array}
\usepackage{graphicx}
\usepackage{enumitem}
\usepackage{tikz}
\usetikzlibrary{positioning,calc,arrows.meta}
\newtheorem{proposition}{Proposition}

\graphicspath{{data/figures/}}

\title{Bounded Precision-Geometry Scaling for Robust Multi-Task Learning under Loss Scale Mismatch}
\author{%
  Krishna Subedi \\
  \texttt{krishna.subedi@neryva.com} \\
}
\date{}

\begin{document}
\maketitle

\begin{abstract}
Multi-task learning often combines losses that span several orders of magnitude, causing homoscedastic uncertainty weighting to degrade severely. We propose Bounded Precision-Geometry Scaling (BPGS), a method that maps each task's log-variance through a bounded sigmoid parameterisation anchored to detached batch loss statistics, and decouples network optimisation from uncertainty optimisation. Its normalised task weights are provably invariant to uniform rescaling under non-degenerate loss scales. We evaluate BPGS on synthetic stress tests and three real-world benchmarks: NYUv2 dense prediction, Yeast multi-label classification, and RF1 multi-target regression. Under pure loss rescaling from $\times 1$ to $\times 1000$, its macro score changes from 0.777 to 0.778, whereas Kendall weighting drops from 0.780 to 0.637; $\ell_1$-normalising Kendall's weights does not close the gap. On NYUv2, BPGS records the lowest depth absolute relative error (0.223), depth RMSE (0.790), and total loss (1.891) among all compared methods, including Nash-MTL. Sensitivity studies on batch size and calibration show small variation across the tested ranges, and runtime overhead relative to Kendall is under 1\%. BPGS posts the highest Yeast micro-F1 (0.616) and is competitive on RF1, though PCGrad leads RMSE and MAE there. These findings establish BPGS as a scale-robust alternative to homoscedastic uncertainty weighting, notably effective when loss-scale disparities dominate multi-task optimization.
\end{abstract}

\section{Introduction}
In multi-task learning, raw losses often differ in magnitude; for example, a pixel-level cross-entropy may dominate a regression residual by several orders of magnitude. If the weighting mechanism scales with loss magnitude, tasks with larger losses dominate the shared gradients, diminishing the updates for smaller-loss tasks, preventing effective training, and destabilising learned task-weighting parameters \cite{caruana1997multitask}.

Homoscedastic uncertainty weighting addresses this disparity by learning one log-variance per task \cite{kendall2018}. This works well when the losses have similar scales. However, under severe scale disparity, its unconstrained parameters can translate the scale gap into large precision shifts, which can destabilise optimisation. PCGrad \cite{yu2020pcgrad}, a gradient surgery method, tries to resolve directional conflicts among task gradients but leaves the scalar loss weighting problem unaddressed.

To address the gap left by the aforementioned methods, we propose Bounded Precision-Geometry Scaling (BPGS). BPGS confines every task's learned uncertainty to a finite interval by mapping each latent coordinate through a sigmoid and re-centring the output on detached batch log-loss statistics. A split objective keeps weight adaptation and network training on separate computational paths. We study whether this bounded, batch-anchored parameterisation can remain robust to severe loss-scale disparities without hurting performance on standard multi-task benchmarks.

We make four contributions:
\begin{enumerate}[leftmargin=1.25em]
\item We introduce BPGS as a bounded, batch-aware, split uncertainty-weighting method and prove that its normalised task weights are strictly invariant to uniform loss rescaling under non-degenerate loss scales.
\item Across three orders of magnitude of loss scaling ($\times1$--$\times1000$), BPGS maintains stable performance, whereas standard homoscedastic weighting \cite{kendall2018} degrades markedly; adding an $\ell_1$ normalisation constraint does not recover this stability.
\item On the NYUv2 dataset, BPGS outperforms competitive baselines, including Nash-MTL \cite{navon2022nash}, in depth estimation accuracy and aggregate multi-task loss. Detailed ablations isolate the empirical contributions of the batch-anchored parameterisation, first-batch calibration, and decoupled stop-gradient objective.
\item We systematically evaluate sensitivity to batch size, calibration parameters, and runtime overhead on NYUv2, and demonstrate that the performance gains of BPGS extend to multi-task tabular benchmarks (Yeast and RF1).
\end{enumerate}
\section{Related Work}
Multi-task networks typically optimise shared representations across multiple objectives \cite{caruana1997multitask} and often rely on adaptive weighting schemes to balance task contributions. BPGS builds directly on homoscedastic uncertainty weighting \cite{kendall2018}, which assigns every task a learned log-variance that simultaneously scales the task loss and regularises the uncertainty estimate. Other adaptive methods reweight tasks based on training dynamics: GradNorm \cite{chen2018gradnorm} balances gradient magnitudes to synchronise learning progress, while Dynamic Task Prioritization \cite{guo2018dtp} emphasises lagging objectives. In contrast, BPGS reparameterises the learned variance through a bounded mapping anchored to detached batch loss statistics. The design specifically targets loss-scale robustness rather than directional gradient conflict.

Gradient manipulation methods resolve task conflict directly in gradient space rather than in the loss domain. Sener and Koltun \cite{sener2018multiobjective} frame multi-task learning as multi-objective optimisation and adapt the Multiple Gradient Descent Algorithm (MGDA) \cite{desideri2012mgda} to converge toward Pareto-stationary solutions. Extending this paradigm, gradient surgery techniques explicitly alter the update direction to minimise negative transfer: PCGrad \cite{yu2020pcgrad} orthogonally projects conflicting task gradients, while CAGrad \cite{liu2021cagrad} regularises the update toward conflict-averse directions. Game-theoretic and accelerated multi-objective formulations such as Nash-MTL \cite{navon2022nash}, IMTL-G \cite{liu2021imtl}, and FAMO \cite{liu2023famo} similarly operate in gradient space by solving per-step optimisation subproblems. In contrast, BPGS operates in the loss domain through scalar uncertainty reparameterisation, neither modifying gradient geometry nor adding iterative optimisation overhead. Our benchmarks compare BPGS against standard loss-weighting methods alongside representative gradient-level methods, including Nash-MTL on NYUv2 and PCGrad on tabular tasks.

The resulting comparison set spans both paradigms, scoped to the scale-robustness question rather than a comprehensive multi-task optimisation survey.

\section{Method}

\begin{figure}[htbp]\centering
\resizebox{\textwidth}{!}{
\definecolor{netblue}{RGB}{31,119,180}
\definecolor{uncorange}{RGB}{216,100,20}
\definecolor{calteal}{RGB}{22,155,122}
\definecolor{fwd}{RGB}{65,65,65}

\tikzset{
  block/.style={draw=gray!65,fill=gray!7,rounded corners=2.5pt,align=center,inner sep=4pt,line width=0.7pt},
  oblock/.style={draw=netblue!85,fill=netblue!6,rounded corners=2.5pt,align=center,inner sep=4pt,line width=0.95pt},
  ublock/.style={draw=uncorange!90,fill=uncorange!6,rounded corners=2.5pt,align=center,inner sep=4pt,line width=0.95pt},
  lossnode/.style={circle,draw=gray!65,fill=white,minimum size=7.4mm,inner sep=0pt,line width=0.7pt,font=\footnotesize},
  sgnode/.style={circle,draw=gray!60,fill=gray!20,minimum size=7mm,inner sep=0pt,font=\footnotesize\bfseries,line width=0.7pt},
  sgbadge/.style={rectangle,rounded corners=1.5pt,draw=gray!55,fill=gray!18,inner sep=2.2pt,font=\tiny\bfseries,line width=0.5pt,text=black},
  smallhead/.style={draw=gray!65,fill=gray!8,rounded corners=1.5pt,minimum width=9.5mm,minimum height=5.6mm,inner sep=0pt,font=\footnotesize,line width=0.7pt},
  flow/.style={-{Latex[length=2.1mm,width=1.9mm]},line width=0.85pt,fwd},
  gradw/.style={-{Latex[length=2.8mm,width=2.5mm]},line width=1.3pt,netblue},
  gradth/.style={-{Latex[length=2.8mm,width=2.5mm]},line width=1.3pt,uncorange},
  dashcal/.style={-{Latex[length=2mm,width=1.8mm]},dashed,line width=0.9pt,calteal},
}

\begin{tikzpicture}[font=\footnotesize]

\filldraw[fill=netblue!5,draw=netblue!50,rounded corners=3pt,line width=0.9pt]
  (8.7,2.5) rectangle (13.5,6.7);

\node[block,minimum width=9mm,minimum height=9mm] (input) at (0.55,8.0) {$\mathbf{x}$};
\node[font=\scriptsize,gray] at (0.55,7.15) {input batch};

\node[block,minimum width=19mm,minimum height=19mm] (enc) at (2.6,8.0)
  {\textbf{shared encoder}\\[2pt] $f_{w}$\\[1pt] {\scriptsize parameters $w$}};

\node[smallhead] (h1) at (5.0,8.8) {$h_1$};
\node[smallhead] (h2) at (5.0,8.0) {$h_2$};
\node[font=\scriptsize,gray] at (5.0,7.15) {$\vdots$};
\node[smallhead] (hT) at (5.0,6.3) {$h_T$};
\node[font=\scriptsize,gray] at (5.0,9.35) {task heads};

\node[lossnode] (L1) at (6.8,8.8) {$L_1$};
\node[lossnode] (L2) at (6.8,8.0) {$L_2$};
\node[font=\scriptsize,gray] at (6.8,7.15) {$\vdots$};
\node[lossnode] (LT) at (6.8,6.3) {$L_T$};

\draw[flow] (input.east) -- (enc.west);
\draw[flow] (enc.east) -- (h1.west);
\draw[flow] (enc.east) -- (h2.west);
\draw[flow] (enc.east) -- (hT.west);
\draw[flow] (h1.east) -- (L1.west);
\draw[flow] (h2.east) -- (L2.west);
\draw[flow] (hT.east) -- (LT.west);

\draw[flow] (L1.east) -- (7.7,8.8);
\draw[flow] (L2.east) -- (7.7,8.0);
\draw[flow] (LT.east) -- (7.7,6.3);

\node[sgnode] (sgn) at (7.7,3.0) {sg};

\node[block,minimum width=42mm,minimum height=17mm] (stats) at (4.6,3.0)
  {\textbf{detached log-loss statistics}\\[2pt]
   {\scriptsize $\tilde L_i=\max(L_i,\varepsilon_{\log})$,~~ $\ell_i=\log\tilde L_i$}\\[1pt]
   {\scriptsize $\mu(L)=\mathrm{mean}_i\;\ell_i$}\\[1pt]
   {\scriptsize $\bar\varsigma(L)=\max(\mathrm{std}_i\;\ell_i,\;\varepsilon_{\mathrm{std}})$}};

\draw[flow] (sgn.west) -- (stats.east);
\draw[flow] (6.7,2.35) -- (8.7,2.72);
\node[font=\scriptsize] at (8.2,2.2) {$\mu(L),\bar\varsigma(L)$};

\node[font=\small\bfseries,text=netblue!75!black] at (11.1,6.32) {Bounded log-variance chart};
\node at (11.1,5.72) {$z_i=\tau_T\big(2\sigma(\theta_i)-1\big)$};
\node at (11.1,5.20) {$s_i=\mu(L)+\bar\varsigma(L)\,z_i$};
\node at (11.1,4.70) {$\tau_T=\sqrt{T-1}+0.1,\quad z_i\in(-\tau_T,\tau_T)$};

\fill[netblue!13] (9.6,3.08) rectangle (12.7,3.62);
\draw[dashed,netblue!55,line width=0.6pt] (9.6,2.98)  -- (9.6,3.72);
\draw[dashed,netblue!55,line width=0.6pt] (12.7,2.98) -- (12.7,3.72);
\draw[netblue!70,line width=0.8pt] (11.15,3.08) -- (11.15,3.62);
\draw[-{Latex[length=1.8mm]},gray!75,line width=0.7pt] (9.0,3.35) -- (13.3,3.35);

\node[font=\scriptsize] at (9.6,2.82)  {$\mu-\bar\varsigma\tau_T$};
\node[font=\scriptsize] at (11.15,2.82) {$\mu(L)$};
\node[font=\scriptsize] at (12.7,2.82) {$\mu+\bar\varsigma\tau_T$};

\fill[fwd] (10.1,3.35)  circle (0.07);
\fill[fwd] (11.6,3.35)  circle (0.07);
\fill[fwd] (12.38,3.35) circle (0.07);
\node[font=\scriptsize] at (10.1,3.86)  {$s_1$};
\node[font=\scriptsize] at (11.6,3.86)  {$s_2$};
\node[font=\scriptsize] at (12.38,3.86) {$s_T$};

\node[block,minimum width=46mm,minimum height=9.5mm] (theta) at (10.3,1.0)
  {\textbf{uncertainty coordinates}\\[-1pt]
   {\scriptsize $\theta_1,\dots,\theta_T\in\mathbb{R}$ (one per task)}};

\draw[flow] (10.3,1.475) -- (10.3,2.5);
\node[font=\scriptsize,anchor=west] at (10.48,1.95) {$z_i$};

\node[draw=calteal,dashed,fill=calteal!5,rounded corners=2pt,line width=0.9pt,
      align=center,inner sep=3.5pt,font=\scriptsize] (calib) at (10.7,-0.6)
  {\textbf{first-batch auto-calibration (one-shot)}\\[2pt]
   $z_i^{(0)}=\big(\mathrm{sg}[\log\tilde L_i]-\mu(L)\big)/\bar\varsigma(L)$\\[1pt]
   $\theta_i\leftarrow\sigma^{-1}\big((z_i^{(0)}/\tau_T+1)/2\big)$};

\draw[dashcal] (10.3,0.18) -- (10.3,0.525);
\node[font=\tiny,text=calteal,anchor=west] at (10.46,0.3) {init};

\node[block,minimum width=40mm,minimum height=15mm] (prec) at (16.0,5.1)
  {\textbf{precision \& L1-normalization}\\[2pt]
   {\scriptsize $\omega_i=e^{-s_i}$}\\[1pt]
   {\scriptsize $\alpha_i=\omega_i\big/\sum_{j=1}^{T}\omega_j$}};

\draw[flow] (13.5,5.1) -- (prec.west);
\node[font=\scriptsize] at (13.8,5.33) {$s_i$};

\node[oblock,minimum width=48mm,minimum height=14mm] (Jnet) at (17.9,8.0)
  {\textbf{network objective}\\[2pt]
   {\footnotesize $J_{\mathrm{net}}=\sum_{i=1}^{T}\mathrm{sg}[\alpha_i]\,L_i$}};

\node[ublock,minimum width=56mm,minimum height=12mm] (Junc) at (16.9,1.0)
  {\textbf{uncertainty objective}\\[2pt]
   {\footnotesize $J_{\mathrm{unc}}=\sum_{i=1}^{T}\big[\tfrac{1}{2}\,\omega_i\,\mathrm{sg}[L_i]+\tfrac{1}{2}\,s_i\big]$}};

\draw[flow] (7.7,8.0) -- (Jnet.west);
\node[font=\scriptsize] at (11.6,8.28) {current-batch task losses $L_1,\dots,L_T$};

\draw[flow] (16.55,5.85) .. controls (16.75,6.55) and (17.05,6.85) .. (17.6,7.3);
\node[sgbadge] at (16.68,6.35) {sg};
\node[font=\scriptsize] at (16.02,6.62) {$\alpha_i$};

\draw[flow] (12.9,2.5) -- (15.0,1.6);
\node[font=\scriptsize] at (13.6,2.38) {$s_i$};
\draw[flow] (16.0,4.35) -- (16.0,1.6);
\node[font=\scriptsize,anchor=west] at (16.16,3.1) {$\omega_i$};

\draw[flow] (sgn.south) -- (7.7,-1.75) -| (Junc.south);
\node[font=\scriptsize] at (12.3,-1.5) {$\mathrm{sg}[L_i]$ — detached losses (every iteration)};

\draw[gradth] (Junc.west) -- (theta.east);
\node[font=\scriptsize,text=uncorange] at (13.35,1.34) {$g_\theta\,\nabla_\theta J_{\mathrm{unc}}$};
\node[font=\tiny,gray] at (13.35,0.68) {$(g_\theta=100)$};

\draw[gradw] (17.9,8.7) -- (17.9,9.7) -- (2.6,9.7) -- (2.6,8.98);
\node[font=\scriptsize,text=netblue] at (10.25,9.98)
  {network update: $\nabla_w J_{\mathrm{net}}=\sum_i \mathrm{sg}[\alpha_i]\,\nabla_w L_i$  ($\alpha_i$ detached)};

\node[font=\footnotesize\bfseries,gray] at (0.75,9.7) {BPGS};

\fill[fwd] (7.7,8.8) circle (0.055);
\fill[fwd] (7.7,8.0) circle (0.055);
\fill[fwd] (7.7,6.3) circle (0.055);
\draw[flow] (7.7,8.8) -- (sgn.north);

\draw[gradw] (0.5,-2.35) -- (1.4,-2.35);
\node[font=\scriptsize,anchor=west] at (1.6,-2.35) {network gradient path (updates $w$)};
\draw[gradth] (6.4,-2.35) -- (7.3,-2.35);
\node[font=\scriptsize,anchor=west] at (7.5,-2.35) {uncertainty path (updates $\theta$)};
\node[sgbadge] at (11.7,-2.35) {sg};
\node[font=\scriptsize,anchor=west] at (12.1,-2.35) {stop-gradient};
\draw[draw=calteal,dashed,fill=calteal!5,rounded corners=1pt,line width=0.8pt]
  (15.6,-2.55) rectangle (16.3,-2.15);
\node[font=\scriptsize,anchor=west] at (16.5,-2.35) {one-shot first-batch calibration};

\end{tikzpicture}}
\caption{Schematic overview of BPGS. The shared encoder $f_w$ yields $T$ task losses per batch.
A stop-gradient branch extracts log-loss statistics $\mu(L)$ and $\bar\varsigma(L)$, which anchor
a bounded chart that restricts each task's log-variance to a finite interval
(\S\ref{sec:boundedness}). Precisions derived from the chart are $\ell_1$-normalised into task weights
$\alpha_i$. Training is split: the network pass minimises a weighted loss with detached weights,
while a separate uncertainty pass adjusts the $\theta_i$ coordinates with detached losses.
First-batch auto-calibration sets the initial $\theta_i$ values; equations and design details
appear in \S\ref{sec:chart}--\ref{sec:boundedness}.}
\label{fig:bpgs_overview}
\end{figure}

\subsection{Problem setup}
Let $T$ denote the number of tasks and $w$ the shared network parameters. The loss vector
\begin{equation}
L(w) = \bigl(L_1(w), \dots, L_T(w)\bigr)
\end{equation}
collects all raw task losses for the current batch. We seek adaptive task weights that stay informative even when these losses differ by orders of magnitude. BPGS approaches the problem by learning one latent uncertainty coordinate per task and mapping it into a bounded log-variance chart, as described next.

\subsection{Canonical BPGS chart}
\label{sec:chart}
The batch-aware form of BPGS begins by computing detached log-loss statistics from the current minibatch:
\begin{equation}
\widetilde L_i = \max(L_i, \varepsilon_{\log}), \qquad
\mu(L) = \frac{1}{T}\sum_{i=1}^{T}\operatorname{sg}[\log \widetilde L_i],
\end{equation}
and
\begin{equation}
\bar{\varsigma}(L) =
\max\!\left(
\sqrt{\frac{1}{T}\sum_{i=1}^{T}\bigl(\operatorname{sg}[\log \widetilde L_i] - \mu(L)\bigr)^2},
\varepsilon_{\mathrm{std}}
\right),
\end{equation}
where $\varepsilon_{\log}, \varepsilon_{\mathrm{std}} > 0$ are small numerical stability floors preventing zero-argument logarithms and degenerate zero variance.
Each task owns a single learnable uncertainty coordinate $\theta_i \in \mathbb{R}$. The latent radius depends on the task count:
\begin{equation}
\tau_T = \sqrt{T-1}+0.1,
\label{eq:tau}
\end{equation}
and the bounded latent coordinate is
\begin{equation}
z_i(\theta_i) = \tau_T \bigl(2\sigma(\theta_i)-1\bigr),
\end{equation}
where $\sigma(\cdot)$ is the logistic sigmoid. Combining $z_i$ with the batch statistics gives the canonical BPGS log-variance:
\begin{equation}
s_i(\theta_i;L) = \mu(L) + \bar{\varsigma}(L)\, z_i(\theta_i).
\end{equation}

The radius in Eq.~\eqref{eq:tau} follows from the geometry of a population-standardised
$T$-vector. In the unclipped regime ($\bar{\varsigma}(L) > \varepsilon_{\mathrm{std}}$), its coordinates have zero mean and unit average
square. If one coordinate has magnitude $M$, the other $T-1$ coordinates must sum to $-M$;
their squared sum is minimised when they are equal. Hence
$T \ge M^2+(T-1)(M/(T-1))^2$, giving $M\le\sqrt{T-1}$. The added $0.1$ is a numerical margin
that keeps first-batch inverse-sigmoid calibration away from the chart boundary.
From $s_i$ we obtain the task precision
\begin{equation}
\omega_i(\theta_i;L) = e^{-s_i(\theta_i;L)},
\end{equation}
and the network weight by $\ell_1$ normalisation:
\begin{equation}
\alpha_i(\theta_i;L) =
\frac{\omega_i(\theta_i;L)}{\sum_{j=1}^{T}\omega_j(\theta_j;L)}.
\end{equation}

The net effect: every task's log-variance lives in a finite, batch-anchored interval rather than on the entire real line.

Before the first network update, BPGS runs a one-shot auto-calibration step. It maps the detached log-loss statistics of the initial batch into the bounded chart to set the starting $\theta_i$ values. The ablation study (Section~\ref{sec:ablation}) tests this initialisation rule separately from the batch-aware chart.

\subsection{Split optimisation objectives}
Network parameters and uncertainty coordinates are trained under separate objectives. The network loss is
\begin{equation}
J_{\mathrm{net}}(w \mid \theta, L)
=
\sum_{i=1}^{T} \operatorname{sg}\!\left[\alpha_i(\theta_i;L)\right] L_i,
\end{equation}
where $\operatorname{sg}[\cdot]$ denotes stop-gradient. The uncertainty objective is
\begin{equation}
J_{\mathrm{unc}}(\theta \mid L)
=
\sum_{i=1}^{T}
\left[
\frac{1}{2}\omega_i(\theta_i;L)\operatorname{sg}[L_i]
+
\frac{1}{2}s_i(\theta_i;L)
\right].
\label{eq:unc}
\end{equation}
While Eq.~\eqref{eq:unc} shares its loss form with Kendall et al.~\cite{kendall2018}, optimization is performed with respect to the latent coordinate $\theta_i$ rather than an unconstrained log-variance. Confining $s_i(\theta_i; L)$ to a batch-conditional interval prevents unbounded parameter drift and fundamentally stabilizes the optimization dynamics.

A fixed gradient scale $g_\theta=100$ is applied to the uncertainty update across all reported
configurations. It compensates for the smaller magnitudes of the uncertainty gradients and
affects only update speed, not the weighting rule itself (full optimiser settings:
Appendix~\ref{app:reproducibility}).

Both passes draw on the same current-batch losses at every iteration. In $J_{\mathrm{net}}$, the
normalised weights are detached so that weight changes do not reshape the network gradient.
Symmetrically, $J_{\mathrm{unc}}$ detaches the losses and batch statistics. This separation
prevents the two updates from interfering within a single step.

\subsection{Batch-conditional boundedness}
\label{sec:boundedness}
The formal property underpinning the robustness argument is that $s_i$ is bounded conditionally on the current batch. Fix a batch $L$ and take any finite $\theta_i$:
\begin{equation}
\mu(L)-\bar{\varsigma}(L)\tau_T
<
s_i(\theta_i;L)
<
\mu(L)+\bar{\varsigma}(L)\tau_T.
\end{equation}
The bound follows from $\sigma(\theta_i)\in(0,1)$, which forces
\begin{equation}
z_i(\theta_i)\in(-\tau_T,\tau_T).
\end{equation}
Substituting into the affine map for $s_i$ yields the stated interval. As a consequence, the
induced precision $\omega_i(\theta_i;L)$ is strictly positive and bounded for every fixed batch.
Note the qualifier: the guarantee is batch-conditional, not globally independent of the observed
loss statistics.

Boundedness alone does not ensure learnability near the boundary. As $z_i$ approaches
$\pm\tau_T$, the sigmoid derivative vanishes and recovery may stall. We inspected the logged
trajectories from all main-paper runs (Appendix~\ref{app:chart_behavior}). The most extreme
ratio observed was $|z_i|/\tau_T=0.93$, occurring at initialisation on NYUv2; by the final
epoch, the caps settled to at most 0.84 (NYUv2), 0.69 (Yeast), and 0.89 (RF1). Even the
closest point to the boundary retained a derivative factor of roughly $0.033$---about
13\% of its maximum---and final-epoch values reached approximately $0.074$. No logged run
exhibited boundary locking. This margin is empirical, not a formal guarantee for arbitrary
tasks or training regimes.

\subsection{Uniform rescaling invariance}
\label{sec:invariance}
\begin{proposition}[Uniform rescaling invariance]
\label{prop:invariance}
Let $L'_i=cL_i$ for all tasks $i \in \{1,\dots,T\}$ with scalar multiplier $c>0$. For any loss vector satisfying $L_i > \varepsilon_{\log}$ and $\bar{\varsigma}(L) > \varepsilon_{\mathrm{std}}$ on both batches, the normalised task weights satisfy $\alpha_i(\theta_i;L')=\alpha_i(\theta_i;L)$ for all $i$. First-batch auto-calibration also produces identical initial $\theta_i$ values under the same rescaling.
\end{proposition}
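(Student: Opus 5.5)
The plan is to track how each ingredient of the chart transforms under $L\mapsto cL$, and to show that the only change is a common additive shift of every $s_i$, which the $\ell_1$ normalisation then cancels. We use the hypotheses in this order.

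First, since $L_i>\varepsilon_{\log}$ and $cL_i>\varepsilon_{\log}$ on both batches, the floor is inactive. Hence $\widetilde L'_i=cL_i$ and $\log\widetilde L'_i=\log\widetilde L_i+\log c$, so the whole log-loss vector is translated by the constant $\log c$. Averaging gives $\mu(L')=\mu(L)+\log c$. The centred deviations $\log\widetilde L'_i-\mu(L')$ coincide with $\log\widetilde L_i-\mu(L)$, so the raw population standard deviation is unchanged. Because $\bar\varsigma(L)>\varepsilon_{\mathrm{std}}$ and $\bar\varsigma(L')>\varepsilon_{\mathrm{std}}$ are both assumed, the $\max$ with $\varepsilon_{\mathrm{std}}$ returns the raw value in both cases, and therefore $\bar\varsigma(L')=\bar\varsigma(L)$.

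Next, for a fixed $\theta_i$ the latent coordinate $z_i(\theta_i)$ does not depend on $L$. Therefore
\begin{equation}
s_i(\theta_i;L')=\mu(L)+\log c+\bar\varsigma(L)z_i(\theta_i)=s_i(\theta_i;L)+\log c,
\end{equation}
and consequently $\omega_i(\theta_i;L')=c^{-1}\omega_i(\theta_i;L)$ for every $i$. The common factor $c^{-1}$ cancels between numerator and denominator of $\alpha_i$, which gives the first claim.

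For the calibration claim, $z_i^{(0)}(L')=(\log\widetilde L'_i-\mu(L'))/\bar\varsigma(L')$. This equals $z_i^{(0)}(L)$, since the $\log c$ shifts cancel in the numerator and the denominators agree by the previous step. The map $\theta_i=\sigma^{-1}((z_i^{(0)}/\tau_T+1)/2)$ depends only on $z_i^{(0)}$ and $T$, so the initial $\theta_i$ are identical. We should also confirm that $\sigma^{-1}$ is well defined, i.e.\ that $|z_i^{(0)}|<\tau_T$. This follows from the radius argument given after Eq.~\eqref{eq:tau}: in the unclipped regime a standardised coordinate satisfies $|z_i^{(0)}|\le\sqrt{T-1}<\tau_T$.

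The main obstacle is not algebraic but the floor handling. The argument hinges on both clipping operations being inactive on both batches, which is exactly why the hypothesis is stated for $L$ and $L'$ separately. If either floor binds on one batch only, the shift by $\log c$ breaks and invariance can fail. We will note this briefly and otherwise present the proof as the short chain of equalities above.
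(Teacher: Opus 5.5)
Your proof is correct and follows the paper's argument: the $\log c$ shift of every log-loss moves $\mu$ by $\log c$ and leaves $\bar\varsigma$ and $z_i$ fixed, so every $s_i$ shifts by $\log c$ and the common factor $1/c$ cancels in $\alpha_i$, while the standardised calibration coordinates are unchanged; your explicit floor bookkeeping and your check that $|z_i^{(0)}|\le\sqrt{T-1}<\tau_T$, so that $\sigma^{-1}$ is well defined, are details the paper leaves implicit. One correction to your closing remark: once the $\varepsilon_{\log}$ floor is inactive on both batches, the raw standard deviations of $L$ and $L'$ coincide, so the $\varepsilon_{\mathrm{std}}$ floor acts identically on both (still giving $\bar\varsigma(L')=\bar\varsigma(L)$ when it binds) and cannot break invariance; only the $\varepsilon_{\log}$ floor can.
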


\begin{proof}
Uniform rescaling adds $\log c$ to every log-loss. Therefore $\mu(L')=\mu(L)+\log c$ while
$\bar{\varsigma}(L')=\bar{\varsigma}(L)$, and $z_i(\theta_i)$ is unchanged. Hence
$s_i(\theta_i;L')=s_i(\theta_i;L)+\log c$, so each raw precision is multiplied by the same
factor $1/c$, which cancels in the $\ell_1$ normalisation. The standardised coordinates used for
first-batch calibration are unchanged by the same additive shift.
\end{proof}

This result is about normalised weights on a single batch; it says nothing about full optimiser
trajectories or final predictive metrics. The pure-rescaling experiment (Section~\ref{sec:rescaling_results})
probes that broader empirical behaviour.

\section{Experimental Setup}
\subsection{Benchmarks}
We evaluate BPGS across different benchmark and stress settings. The main dense-prediction comparison uses the NYUv2 multi-task benchmark \cite{silberman2012nyuv2}, which contains 795 training images and 654 validation images with three tasks: semantic segmentation, depth estimation, and surface-normal prediction. This comparison runs for 120 epochs and reports final-epoch metrics aggregated over seeds.

A separate NYUv2 ablation study uses a fixed 50\% training subset with the validation split unchanged. It compares four BPGS variants against Kendall uncertainty weighting to isolate the bounded chart and the initialisation rule: the canonical batch-aware auto-calibrated form, a batch-aware fixed-initialisation form, a stateless fixed-initialisation form, and a stateless auto-calibrated form. The ablation runs for 60 epochs over the same three training seeds.

Two real-data benchmarks evaluate the generalisability of the gains beyond dense prediction. Yeast \cite{elisseeff2001yeast} is a 14-label multi-label classification task with 1,500 training and 917 validation examples. RF1 \cite{spyromitros2016mtr} is an 8-target regression task with 4,108 training and 5,017 validation examples. Both report final task metrics aggregated over three seeds from runs of up to 120 epochs.

Four controlled studies reuse the same NYUv2 50\% subset. The stop-gradient study pairs canonical
BPGS against a coupled variant. The batch-size study sweeps sizes 4, 8, and 16. Both run for 60
epochs over three seeds. A first-batch study varies only the loader seed (1001, 2001, 3001) while
fixing the model seed, isolating calibration sensitivity. The overhead study records per-epoch
wall-clock time and peak CUDA memory for BPGS versus Kendall over 60 epochs and three seeds.

Three synthetic studies test scale robustness. The pure loss-rescaling study applies post-hoc multipliers $\times 1$, $\times 10$, $\times 100$, and $\times 1000$ to check invariance to raw loss magnitude. The scale-stress study uses the same grid in a synthetic multi-task setting where increasing scale imbalance can degrade all methods. The heterogeneous mixed-stress study adds three regimes---\emph{clean}, \emph{conflict}, and \emph{noisy}---to test what happens when tasks differ not only in scale but also in interaction structure.

A separate diagnostic asks whether normalisation alone explains the rescaling result: it compares
BPGS, Kendall, and Kendall with $\ell_1$-normalised precision weights on the same rescaling grid. This
run uses seeds 42, 123, and 999 and is reported apart from the main rescaling table.

\subsection{Compared methods and metrics}
The comparison set varies by benchmark. On NYUv2, the main comparison includes BPGS, Static, Kendall uncertainty weighting \cite{kendall2018}, UWSO, and Nash-MTL \cite{navon2022nash}. The ablation restricts to the four BPGS variants against Kendall. Yeast and RF1 compare BPGS against Static, Kendall, PCGrad \cite{yu2020pcgrad}, a GradNorm-inspired gradient-norm proxy \cite{chen2018gradnorm}, and UWSO. The pure loss-rescaling and scale-stress studies compare BPGS, Kendall, and UWSO. The heterogeneous mixed-stress study uses the same three plus PCGrad.

For NYUv2, we report segmentation mIoU, depth absolute relative error, depth RMSE, mean angular error for normals, normal accuracy within $11.25^\circ$, total validation loss, and in the main benchmark also $\Delta_M$ against the Static baseline \cite{liu2019mtan}. Yeast metrics are macro-F1, micro-F1, and Hamming accuracy. RF1 metrics are $R^2$, RMSE, and MAE. The synthetic stress studies use macro score as the primary statistic, with worst-task score added for the heterogeneous analysis and stress diagnostics.

\subsection{Reporting protocol}
Unless stated otherwise, values are means $\pm$ standard deviations across seeds. The main NYUv2,
Yeast, RF1, rescaling, and heterogeneous-stress results use seeds 42, 43, and 44. The synthetic
scale-stress study in Table~\ref{tab:stress_scale} uses 10 seeds. Synthetic stress results aggregate per-seed summaries,
whereas NYUv2, Yeast, and RF1 tables use the final recorded epoch. The supplementary
materials contain the code, configurations, and reproduction instructions.

Because checkpoint summaries are not uniformly available across methods, mixing best- and
final-checkpoint numbers would introduce method-dependent extraction rules. We therefore use final
epoch metrics for all paper tables. Final NYUv2 runs select checkpoints by \texttt{val/miou} in max
mode, while the ablation and controlled studies use the dataset default
\texttt{val/total\_loss} in min mode. The uncertainty update uses $g_\theta=100$ and full settings are
in Appendix~\ref{app:reproducibility}.

\section{Results}
\subsection{Robustness to loss-scale mismatch}
\label{sec:rescaling_results}
Figure~\ref{fig:stress_summary} separates two perturbation types as the right column isolates pure post-hoc loss rescaling, where only the numerical scale changes. In that setting, BPGS is near-invariant: its macro score stays at 0.777, 0.784, 0.781, and 0.778 from $\times 1$ through $\times 1000$ (Table~\ref{tab:stress_rescaling} and Appendix Figure~\ref{fig:appendix_rescaling}). Proposition~\ref{prop:invariance} shows that the normalised weights are invariant under this rescaling in the non-degenerate regime. The table verifies that this invariance carries through to training outcomes. Kendall-style uncertainty weighting starts slightly higher at $\times 1$, but drops monotonically to 0.637 at $\times 1000$, while UWSO improves from a lower $\times 1$ baseline.

\begin{table}[t]
\centering
\caption{Pure loss rescaling: Macro Score across scale factors. Mean $\pm$ std over 3 seeds.}
\label{tab:stress_rescaling}
\resizebox{\columnwidth}{!}{%
\begin{tabular}{l c c c c}
\toprule
Method & $\times{1}$ & $\times{10}$ & $\times{100}$ & $\times{1000}$ \\
\midrule
Kendall & $\mathbf{0.780 \pm 0.007}$ & $0.769 \pm 0.013$ & $0.714 \pm 0.017$ & $0.637 \pm 0.015$ \\
UWSO & $0.622 \pm 0.073$ & $0.674 \pm 0.036$ & $0.688 \pm 0.035$ & $0.681 \pm 0.014$ \\
BPGS & $0.777 \pm 0.012$ & $\mathbf{0.784 \pm 0.008}$ & $\mathbf{0.781 \pm 0.001}$ & $\mathbf{0.778 \pm 0.006}$ \\
\bottomrule
\end{tabular}
}
\end{table}

\paragraph{Normalisation is not sufficient.}\label{sec:norm_ablation}
On a separate three-seed diagnostic, $\ell_1$-normalised Kendall improves over unnormalised Kendall at
$\times1000$ (0.689 versus 0.658) but still falls by 0.105 from $\times1$ to $\times1000$.
BPGS falls by 0.004 (Appendix~\ref{app:norm_ablation}). This diagnostic uses a different
seed set from the main rescaling table, so we treat it as evidence that
normalisation alone is insufficient, not as a seed-matched estimate of the main-table gap.

\begin{figure}[htbp]
\centering
\includegraphics[width=\textwidth]{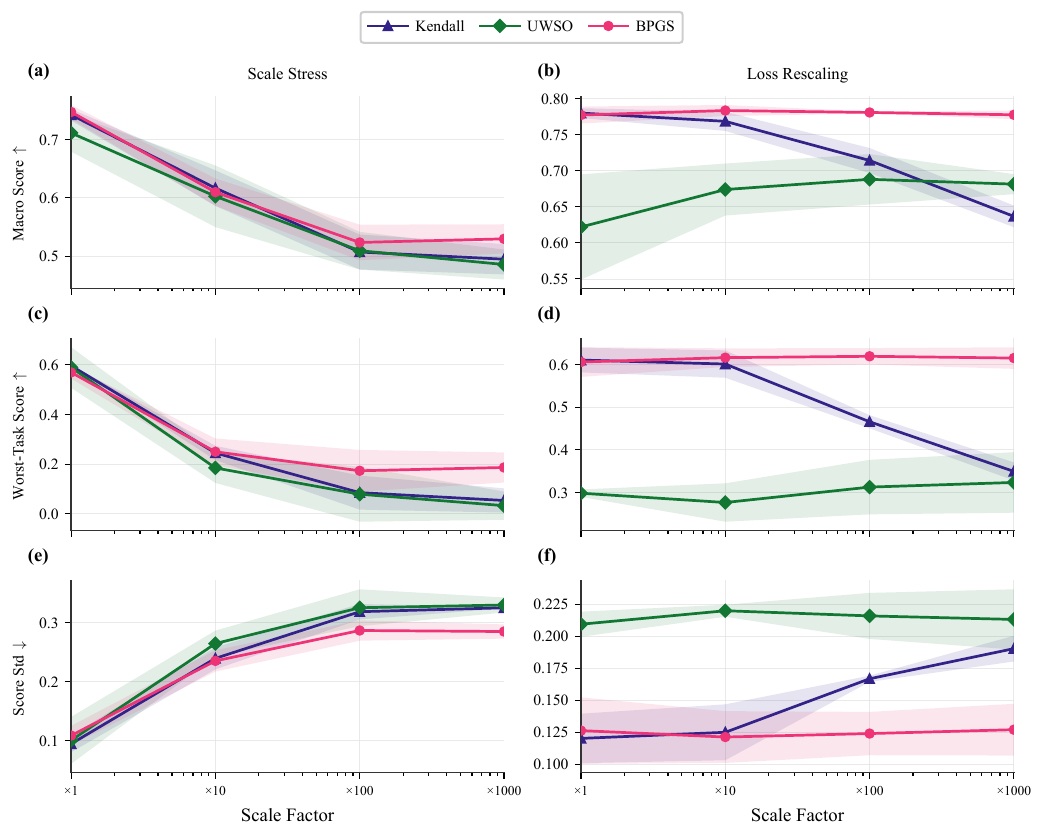}
\caption{Synthetic robustness diagnostics under scale mismatch. Left column: scale stress, where performance is evaluated as task scales become increasingly imbalanced. Right column: pure loss rescaling, where only the reported magnitude changes. BPGS remains nearly invariant under pure rescaling and retains the strongest macro score at the largest scale-stress factors.}
\label{fig:stress_summary}
\end{figure}

The left column of Figure~\ref{fig:stress_summary} is the difficult test as the perturbation changes the optimisation problem, not only its units. All three methods degrade as the factor increases, but BPGS leads at every tested multiplier, reaching 0.543 at $\times 1000$ versus 0.499 for Kendall and 0.494 for UWSO. The relative drop from $\times 1$ to $\times 1000$ is 27.0\% for BPGS, compared with 32.3\% for Kendall and 28.5\% for UWSO (Appendix Table~\ref{tab:degradation}; Appendix Figure~\ref{fig:appendix_scale_stress}). The BPGS--Kendall gap at $\times 1000$ is 0.044, or about 1.1 combined standard deviations, a comparison of means rather than a formal significance test.

The heterogeneous mixed-stress study is less favourable to BPGS than the scale-mismatch studies (Appendix Table~\ref{tab:stress_heterogeneous} and Appendix Figure~\ref{fig:appendix_heterogeneous}). Kendall and PCGrad attain higher macro scores in the noisy and conflict regimes, whereas BPGS gives the strongest worst-task score in the clean regime at 0.301 while UWSO collapses to 0 throughout. These results limit the rescaling advantage to scale-dominated settings and do not extend it to conflict-dominated ones.

\subsection{Main NYUv2 benchmark}
Table~\ref{tab:nyuv2_main} shows that no method dominates every metric. Static achieves the best
segmentation mIoU, and UWSO is strongest on both surface-normal metrics. BPGS leads on depth
and on the aggregate: best absolute relative error (0.223), depth RMSE (0.790),
total loss (1.891), and $\Delta_M$ (0.078). Appendix Figures~\ref{fig:appendix_nyuv2_bars}--\ref{fig:appendix_nyuv2_delta} visualise these per-metric comparisons.

\begin{table}[t]
\centering
\caption{NYUv2 dense prediction benchmark. All methods trained for 120 epochs. Final-epoch metrics reported as mean $\pm$ std over 3 seeds. $\Delta_M$ is computed against the Static baseline; the Nash-MTL value is derived from the reported mean metrics and is shown approximately.}
\label{tab:nyuv2_main}
\resizebox{\columnwidth}{!}{%
\begin{tabular}{l c c c c c c c}
\toprule
Method & mIoU $\uparrow$ & Abs Rel $\downarrow$ & RMSE $\downarrow$ & Angle $\downarrow$ & Within 11.25$^\circ$ $\uparrow$ & Total Loss $\downarrow$ & $\Delta_M$ $\uparrow$ \\
\midrule
Static & $\mathbf{0.341 \pm 0.006}$ & $0.239 \pm 0.000$ & $0.829 \pm 0.011$ & $35.934 \pm 2.055$ & $0.169 \pm 0.009$ & $1.992 \pm 0.033$ & $0.000 \pm 0.000$ \\
Kendall & $0.281 \pm 0.013$ & $0.229 \pm 0.001$ & $\underline{0.808 \pm 0.003}$ & $28.629 \pm 0.431$ & $0.231 \pm 0.007$ & $1.977 \pm 0.029$ & $0.022 \pm 0.000$ \\
UWSO & $0.294 \pm 0.007$ & $\underline{0.228 \pm 0.003}$ & $0.815 \pm 0.012$ & $\mathbf{26.110 \pm 0.383}$ & $\mathbf{0.271 \pm 0.007}$ & $\underline{1.936 \pm 0.023}$ & $\underline{0.059 \pm 0.000}$ \\
\textbf{BPGS} & $\underline{0.312 \pm 0.001}$ & $\mathbf{0.223 \pm 0.002}$ & $\mathbf{0.790 \pm 0.010}$ & $\underline{26.851 \pm 0.334}$ & $\underline{0.257 \pm 0.005}$ & $\mathbf{1.891 \pm 0.014}$ & $\mathbf{0.078 \pm 0.000}$ \\
Nash-MTL & $0.252 \pm 0.027$ & $0.242 \pm 0.009$ & $0.856 \pm 0.029$ & $30.180 \pm 1.010$ & $0.211 \pm 0.011$ & $2.071 \pm 0.070$ & $\approx -0.038$ \\
\bottomrule
\end{tabular}
}
\end{table}

Relative to the strongest competing values, BPGS improves absolute relative error from 0.228 under UWSO to 0.223 and RMSE from 0.808 under Kendall to 0.790, and lowers total loss from 1.936 to 1.891. The NYUv2 comparison therefore shows that the bounded parameterisation holds its own on a standard dense-prediction benchmark while retaining its scale-robustness properties.

Nash-MTL has the weakest mean mIoU (0.252) and largest mIoU standard deviation (0.027), though its
within-$11.25^\circ$ accuracy (0.211) exceeds Static (0.169). It serves as a recent
multi-objective reference point under the same protocol. The single-benchmark comparison does not permit broader conclusions.

\subsection{Ablation on chart and initialisation}
\label{sec:ablation}
Table~\ref{tab:ablation} isolates the batch-aware chart from initialisation. The stateless
auto-calibrated variant collapses (mIoU 0.058; total loss 6.142): without the batch-aware chart,
first-batch calibration alone cannot sustain training. Appendix
Figure~\ref{fig:appendix_ablation_val_curves} traces this collapse in the validation curves, and
Appendix Figure~\ref{fig:appendix_ablation_weights} contrasts the task-weight trajectories.

\begin{table}[t]
\centering
\caption{Ablation study: BPGS variants vs.\ Kendall uncertainty weighting on NYUv2. All methods trained for 60 epochs. Mean $\pm$ std over 3 seeds.}
\label{tab:ablation}
\resizebox{\columnwidth}{!}{%
\begin{tabular}{l c c c c c c}
\toprule
Method & mIoU $\uparrow$ & Abs Rel $\downarrow$ & RMSE $\downarrow$ & Angle $\downarrow$ & Within 11.25$^\circ$ $\uparrow$ & Total Loss $\downarrow$ \\
\midrule
Kendall & $\underline{0.137 \pm 0.006}$ & $0.294 \pm 0.010$ & $\underline{0.990 \pm 0.016}$ & $38.681 \pm 0.936$ & $0.130 \pm 0.005$ & $2.546 \pm 0.035$ \\
BPGS (canonical) & $0.128 \pm 0.008$ & $\mathbf{0.283 \pm 0.005}$ & $\mathbf{0.988 \pm 0.043}$ & $\mathbf{33.834 \pm 1.068}$ & $\mathbf{0.161 \pm 0.015}$ & $\mathbf{2.488 \pm 0.075}$ \\
\textbf{BPGS (BA-fixed)} & $\mathbf{0.143 \pm 0.006}$ & $0.291 \pm 0.005$ & $0.994 \pm 0.022$ & $39.097 \pm 0.478$ & $0.130 \pm 0.004$ & $\underline{2.524 \pm 0.047}$ \\
BPGS (SL-auto) & $0.058 \pm 0.006$ & $0.331 \pm 0.003$ & $1.305 \pm 0.015$ & $\underline{34.661 \pm 1.310}$ & $\underline{0.147 \pm 0.024}$ & $6.142 \pm 1.332$ \\
BPGS (SL-fixed) & $0.120 \pm 0.007$ & $\underline{0.287 \pm 0.006}$ & $1.013 \pm 0.025$ & $35.183 \pm 0.963$ & $0.142 \pm 0.010$ & $2.566 \pm 0.051$ \\
\bottomrule
\end{tabular}
}
\end{table}

Among the stable variants, the best values split between the two batch-aware forms. The canonical batch-aware BPGS variant leads on five of six metrics: depth absolute relative error, depth RMSE, mean angular error, normal accuracy within $11.25^\circ$, and total loss. The batch-aware fixed-initialisation variant leads on mIoU. The stateless fixed variant never matches the canonical batch-aware variant on any metric and trails both batch-aware forms on mIoU and total loss. The batch-aware chart is therefore the critical component. The initialisation rule shifts which task benefits most, without destabilising the overall run.

A separate ablation isolates the split stop-gradient while keeping the bounded, batch-aware chart
and initialisation fixed (Appendix~\ref{app:sensitivity}). Removing it alters the uncertainty
trajectory: $\theta_{\max}$ grows from 3.11 to 3.99 instead of decaying from 3.08 to
2.44, and its final seed-to-seed standard deviation rises from 0.014 to 0.192. Segmentation
mIoU drops from 0.198 to 0.190. The split is therefore a substantive design choice, not an
implementation detail.

\subsection{Sensitivity to batch statistics and calibration}
The batch-aware chart was stable over the tested batch sizes. Across batch sizes 4, 8, and 16,
seed-averaged validation metrics have coefficients of variation below 3.5\%, with most below
1.2\%; final $\theta_{\max}$ ranges from 2.40 to 2.59
(Appendix~\ref{app:sensitivity}). When only the first loader batch is changed, all reported
task metrics have coefficients of variation below 2\%, and the final $\theta_{\max}$ values differ by
at most 0.3\%. These studies bound the claim to the tested ranges; they do not establish
insensitivity to arbitrary batch statistics.

\subsection{Transfer to Yeast and RF1}
The real-data benchmarks separate methods less clearly than the synthetic stress tests, so we read them as consistency checks rather than ranking evidence. On Yeast, BPGS is best on all three reported metrics, with macro-F1 0.436, micro-F1 0.616, and Hamming accuracy 0.773 (Table~\ref{tab:real_data}; Appendix Figure~\ref{fig:appendix_yeast_curves}). On RF1, BPGS is not the top method: PCGrad gives the lowest RMSE and MAE, and UWSO gives the best mean $R^2$. BPGS nonetheless posts the second-best $R^2$ ($-0.429$) with RMSE and MAE within about 2\% of the best values (Appendix Figure~\ref{fig:appendix_rf1_curves}).

\begin{table}[t]
\centering
\caption{Real-data transfer benchmarks. Final-epoch task metrics reported as mean $\pm$ std over 3 seeds. Higher is better for Yeast metrics and RF1 $R^2$; lower is better for RF1 RMSE and MAE.}
\label{tab:real_data}
\resizebox{\columnwidth}{!}{%
\begin{tabular}{l c c c c c c}
\toprule
Method & Macro-F1 $\uparrow$ & Micro-F1 $\uparrow$ & Hamming Acc $\uparrow$ & $R^2$ $\uparrow$ & RMSE $\downarrow$ & MAE $\downarrow$ \\
\midrule
Static & $0.430 \pm 0.008$ & $\underline{0.610 \pm 0.009}$ & $0.772 \pm 0.003$ & $-0.453 \pm 0.127$ & $29.875 \pm 0.485$ & $23.376 \pm 0.451$ \\
Kendall & $\underline{0.431 \pm 0.003}$ & $0.609 \pm 0.001$ & $0.768 \pm 0.002$ & $-0.439 \pm 0.164$ & $\underline{29.801 \pm 0.821}$ & $\underline{23.219 \pm 0.790}$ \\
UWSO & $0.142 \pm 0.014$ & $0.486 \pm 0.008$ & $0.767 \pm 0.002$ & $\mathbf{-0.408 \pm 0.193}$ & $30.144 \pm 1.137$ & $23.449 \pm 1.230$ \\
PCGrad & $0.430 \pm 0.007$ & $0.604 \pm 0.010$ & $0.769 \pm 0.002$ & $-0.432 \pm 0.107$ & $\mathbf{29.596 \pm 0.385}$ & $\mathbf{23.098 \pm 0.368}$ \\
GradNormProxy & $0.408 \pm 0.028$ & $0.597 \pm 0.014$ & $\underline{0.772 \pm 0.001}$ & $-0.508 \pm 0.248$ & $30.033 \pm 0.482$ & $23.380 \pm 0.582$ \\
\textbf{BPGS} & $\mathbf{0.436 \pm 0.006}$ & $\mathbf{0.616 \pm 0.008}$ & $\mathbf{0.773 \pm 0.003}$ & $\underline{-0.429 \pm 0.213}$ & $30.158 \pm 0.952$ & $23.548 \pm 0.908$ \\
\bottomrule
\end{tabular}
}
\end{table}

\subsection{Computational overhead}
On the NYUv2 50\% subset, BPGS takes 40.979 seconds per epoch versus 40.916 for Kendall and uses
3368 MB peak GPU memory versus 3339 MB: increases of 0.15\% and 0.87\%, respectively
(Appendix~\ref{app:overhead}). The method adds three task-level parameters to an
18.9M-parameter network. At this scale, the batch statistics and separate
uncertainty update add no measurable efficiency penalty.

\section{Discussion and Limitations}
Collectively, the empirical results validate BPGS across its primary design objectives: controlled rescaling tests verify its invariance under extreme loss-scale disparities, while standard multi-task benchmarks show that this stability does not worsen general task performance. Consequently, BPGS provides a targeted, scale-robust loss-weighting mechanism that complements existing gradient optimisers instead of attempting to replace them.

The study has clear limits. Our empirical evaluation is restricted to a focused setup that includes NYUv2, two tabular benchmarks, and a synthetic scaling experiment. Extending the validation to additional dense-prediction benchmarks is an immediate next step. The primary benchmark results reflect three random seeds (with 10-seed evaluations reserved for scaling analyses), preventing definitive fine-grained ranking claims among closely performing baselines. As BPGS regulates scalar loss weights rather than gradient trajectories, gradient surgery methods like PCGrad and loss-weighting baselines like Kendall retain an advantage in regimes where directional gradient conflict dominates over scale imbalance. Combining BPGS with such methods is untested future work.

From an optimisation standpoint, bounding the uncertainty parameters introduces a theoretical risk of gradient saturation near the chart boundaries. As reported in Section~\ref{sec:boundedness}, our experiments show an empirical maximum of $|z_i|/\tau_T = 0.93$ without observing saturation, but this operating margin could change under different architectures or learning rates. Furthermore, our theoretical analysis establishes batch-conditional boundedness and normalised-weight invariance but does not extend to formal convergence rates or asymptotic optimality. The diagnostic sensitivity analyses cover only batch sizes 4--16 on NYUv2, and our dense-prediction evaluation includes Nash-MTL as the primary representative baseline, leaving benchmarks against other multi-objective and meta-weighting methods (e.g., CAGrad, IMTL-G, FAMO, and Auto-Lambda \cite{liu2022autolambda}) to future empirical studies.

The runtime study quantifies cost only on the controlled NYUv2 subset and one GPU setting. Potential societal impact is indirect. BPGS may improve multi-output models used in different settings by providing steadier task weights, but downstream deployments still inherit the data, labeling, and evaluation failures of their tasks. In safety-sensitive settings, improving one task while weakening another can be obscured by aggregate scores. We therefore report per-task metrics and state where BPGS is not the best method.

\section{Conclusion}

We propose BPGS, a bounded, batch-anchored reparameterisation of uncertainty weighting designed for multi-task learning under disparate loss scales. Theoretically, BPGS guarantees that normalised task weights remain invariant to uniform loss rescaling under non-degenerate loss scales. Empirically, macro scores shift by less than 0.01 across the $\times1$--$\times1000$ rescaling range, demonstrating substantially greater stability than existing baselines under severe scale disparities. Across multiple benchmarks, BPGS achieves the lowest aggregate multi-task loss and yields the best depth metrics and lowest total loss on the NYUv2 dataset, while other methods lead on segmentation and normals; on tabular tasks such as Yeast and RF1, it yields leading or highly competitive performance.

We do not claim a universal ranking. BPGS is designed solely for regimes where loss-scale disparity, rather than directional gradient interference, is the major source of optimisation instability. As such, it complements rather than replaces gradient-space methods. Promising directions for future research include evaluating BPGS across a broader dense-prediction suite and exploring hybrid frameworks that combine bounded loss weighting with conflict-aware optimisers to simultaneously address scale imbalance and gradient conflict.

\bibliographystyle{unsrtnat}
\bibliography{references}

@article{caruana1997multitask,
  title={Multitask Learning},
  author={Caruana, Rich},
  journal={Machine Learning},
  volume={28},
  number={1},
  pages={41--75},
  year={1997},
  doi={10.1023/A:1007379606734}
}

@inproceedings{kendall2018,
  title={Multi-Task Learning Using Uncertainty to Weigh Losses for Scene Geometry and Semantics},
  author={Kendall, Alex and Gal, Yarin and Cipolla, Roberto},
  booktitle={Proceedings of the IEEE Conference on Computer Vision and Pattern Recognition},
  pages={7482--7491},
  year={2018}
}

@inproceedings{guo2018dtp,
  title={Dynamic Task Prioritization for Multitask Learning},
  author={Guo, Michelle and Haque, Albert and Huang, De-An and Yeung, Serena and Fei-Fei, Li},
  booktitle={Proceedings of the European Conference on Computer Vision (ECCV)},
  pages={282--299},
  year={2018}
}

@inproceedings{yu2020pcgrad,
  title={Gradient surgery for multi-task learning},
  author={Yu, Tianhe and Kumar, Saurabh and Gupta, Abhishek and Levine, Sergey and Hausman, Karol and Finn, Chelsea},
  booktitle={Advances in Neural Information Processing Systems},
  year={2020}
}

@inproceedings{liu2019mtan,
  title={End-To-End Multi-Task Learning With Attention},
  author={Liu, Shikun and Johns, Edward and Davison, Andrew J.},
  booktitle={Proceedings of the IEEE/CVF Conference on Computer Vision and Pattern Recognition},
  year={2019}
}

@inproceedings{liu2021cagrad,
  title={Conflict-Averse Gradient Descent for Multi-task Learning},
  author={Liu, Bo and Liu, Xingchao and Jin, Xiaojie and Stone, Peter and Liu, Qiang},
  booktitle={Advances in Neural Information Processing Systems},
  year={2021}
}

@inproceedings{silberman2012nyuv2,
  title={Indoor segmentation and support inference from {RGBD} images},
  author={Silberman, Nathan and Hoiem, Derek and Kohli, Pushmeet and Fergus, Rob},
  booktitle={European Conference on Computer Vision},
  year={2012}
}

@inproceedings{elisseeff2001yeast,
  title={A kernel method for multi-labelled classification},
  author={Elisseeff, Andr{\'e} and Weston, Jason},
  booktitle={Advances in Neural Information Processing Systems},
  pages={681--687},
  year={2001}
}

@article{spyromitros2016mtr,
  title={Multi-target regression via input space expansion: treating targets as inputs},
  author={Spyromitros-Xioufis, Eleftherios and Tsoumakas, Grigorios and Groves, William and Vlahavas, Ioannis},
  journal={Machine Learning},
  volume={104},
  number={1},
  pages={55--98},
  year={2016},
  doi={10.1007/s10994-016-5546-z}
}

@inproceedings{chen2018gradnorm,
  title={GradNorm: Gradient Normalization for Adaptive Loss Balancing in Deep Multitask Networks},
  author={Chen, Zhao and Badrinarayanan, Vijay and Lee, Chen-Yu and Rabinovich, Andrew},
  booktitle={Proceedings of the 35th International Conference on Machine Learning},
  series={Proceedings of Machine Learning Research},
  volume={80},
  pages={794--803},
  year={2018}
}

@inproceedings{sener2018multiobjective,
  title={Multi-Task Learning as Multi-Objective Optimization},
  author={Sener, Ozan and Koltun, Vladlen},
  booktitle={Advances in Neural Information Processing Systems},
  year={2018}
}

@inproceedings{navon2022nash,
  title={Multi-Task Learning as a Bargaining Game},
  author={Navon, Aviv and Shamsian, Aviv and Achituve, Idan and Maron, Haggai and Kawaguchi, Kenji and Chechik, Gal and Fetaya, Ethan},
  booktitle={Proceedings of the 39th International Conference on Machine Learning},
  series={Proceedings of Machine Learning Research},
  volume={162},
  pages={16428--16446},
  year={2022}
}

@inproceedings{liu2021imtl,
  title={{Towards Impartial Multi-Task Learning}},
  author={Liu, Liyang and Li, Yi and Kuang, Zhanghui and Xue, Jing-Hao and Chen, Yimin and Yang, Wenming and Liao, Qingmin and Zhang, Wayne},
  booktitle={International Conference on Learning Representations},
  year={2021},
  url={https://openreview.net/forum?id=IMPnRXEWpvr}
}

@inproceedings{liu2023famo,
  title={{FAMO}: Fast Adaptive Multitask Optimization},
  author={Liu, Bo and Feng, Yihao and Stone, Peter and Liu, Qiang},
  booktitle={Advances in Neural Information Processing Systems},
  year={2023}
}

@article{liu2022autolambda,
  title={Auto-Lambda: Disentangling Dynamic Task Relationships},
  author={Liu, Shikun and James, Stephen and Davison, Andrew J. and Johns, Edward},
  journal={Transactions on Machine Learning Research},
  year={2022}
}

@article{desideri2012mgda,
  title={Multiple-gradient descent algorithm ({MGDA}) for multiobjective optimization},
  author={D{\'e}sid{\'e}ri, Jean-Antoine},
  journal={Comptes Rendus Mathematique},
  volume={350},
  number={5--6},
  pages={313--318},
  year={2012},
  doi={10.1016/j.crma.2012.03.014}
}

\appendix
\section{Reproducibility Details}
\label{app:reproducibility}
Unless stated otherwise, all experiments aggregate results over three random seeds (42, 43, and 44), except the synthetic scale-stress benchmark, which evaluates over 10 seeds (42--51). For real-world benchmarks (NYUv2, Yeast, and RF1), metrics are reported at the final training epoch across all methods to ensure uniform evaluation without checkpoint selection bias.

\paragraph{Optimization and numerical safeguards.}
All neural network models are trained using AdamW with linear learning-rate warmup and cosine decay. In BPGS, the batch log-loss statistics $\mu(L)$ and $\bar\varsigma(L)$ as well as the normalized weights $\alpha_i$ are detached during network updates, while task losses are detached during uncertainty updates with fixed gradient scale $g_\theta=100$. Numerical stability floors are set to $\varepsilon_{\log}=10^{-8}$ for logarithmic arguments and $\varepsilon_{\mathrm{std}}=10^{-4}$ for standard deviation. Gradient clipping is set to 10.0 for BPGS and 1.0 for Kendall.

\paragraph{Main NYUv2 benchmark.}
The standard NYUv2 benchmark contains 795 training and 654 validation images ($288\times 384$ resolution) evaluated on a 3-task SegNet architecture ($\approx 18.9\text{M}$ parameters): 13-class semantic segmentation (cross-entropy loss, ignore index 255), depth estimation (masked L1 loss), and surface-normal prediction (cosine distance loss). Training runs for 120 epochs with batch size 8, initial learning rate $5\times 10^{-4}$ decaying with cosine schedule to $10^{-6}$ (100 warmup steps), weight decay $10^{-4}$, gradient clipping 10.0, deterministic CPU data augmentation, and mixed precision.

\paragraph{NYUv2 ablation.}
The ablation uses a fixed 50\% training subset (398 images, stratified by majority semantic class and depth quartile using seed~11) with the same SegNet architecture and validation split. Models are trained for 60 epochs with batch size 8, initial learning rate $5\times 10^{-4}$ (decaying to $10^{-6}$ after 100 warmup steps), weight decay $10^{-4}$, and mixed precision.

\paragraph{Yeast and RF1.}
Yeast and RF1 use 4-layer shared-trunk MLPs (hidden dimension 256, dropout 0.1). Yeast consists of 1{,}500 training and 917 validation examples for 14-label classification under binary cross-entropy loss; models are trained for up to 120 epochs with batch size 128, learning rate $5\times 10^{-4}$ decaying to $10^{-5}$ (200 warmup steps), weight decay $10^{-4}$, gradient clipping 10.0, and early stopping patience of 15 epochs on validation loss. RF1 consists of 4{,}108 training and 5{,}017 validation examples for 8-target regression under MSE loss, trained with batch size 256 and identical optimizer settings.

\paragraph{Synthetic stress benchmarks.}
Synthetic scale-stress uses $T=4$ tasks over 10 seeds (42--51); pure loss-rescaling uses $T=5$ tasks over seeds 42, 43, and 44; and heterogeneous mixed-stress uses $T=8$ tasks across three regimes: \emph{clean} ($\rho=0.45, \eta=0.05$), \emph{noisy} ($\rho=0.15, \eta=0.14$), and \emph{conflict} ($\rho=-0.35, \eta=0.10$).

\paragraph{Baseline methods.}
The \texttt{GradNormProxy} baseline implements a gradient-norm balancing proxy inspired by GradNorm. The UWSO baseline implements an analytical inverse-loss weighting rule with fixed softmax temperature $T_{\mathrm{temp}}=2.0$.

\paragraph{Additional NYUv2 and diagnostic studies.}
The stop-gradient, batch-size, first-batch, and computational overhead studies all use the 50\% NYUv2 subset for 60 epochs over three seeds (42, 43, 44). For Nash-MTL, training follows the 120-epoch NYUv2 protocol with batch size 8, AdamW learning rate $5\times10^{-4}$ (cosine decay to $10^{-6}$, 100 warmup steps), weight decay $10^{-4}$, mixed precision, and gradient clipping 1.0. The normalization ablation evaluates on the pure loss-rescaling grid over seeds 42, 123, and 999.

\paragraph{Reproducibility disclosures.}
All experiments use standard public benchmarks cited in the main text. Complete source code, configuration files, and reproduction scripts are publicly available at \url{https://github.com/neryva-lab/spectra} and included in the supplementary material. Computational overhead measurements are systematically reported for the controlled benchmark setting in Appendix~\ref{app:overhead}.

\section{Additional Stress Results}
\label{app:norm_ablation}
\begin{table}[htbp]
\centering
\caption{Synthetic scale stress: Macro Score across scale factors. Mean $\pm$ std over 10 seeds.}
\label{tab:stress_scale}
\resizebox{\columnwidth}{!}{%
\begin{tabular}{l c c c c}
\toprule
Method & $\times{1}$ & $\times{10}$ & $\times{100}$ & $\times{1000}$ \\
\midrule
Kendall & $0.738 \pm 0.017$ & $0.607 \pm 0.038$ & $0.522 \pm 0.030$ & $0.499 \pm 0.025$ \\
UWSO & $0.690 \pm 0.034$ & $0.604 \pm 0.040$ & $0.525 \pm 0.035$ & $0.494 \pm 0.024$ \\
BPGS & $\mathbf{0.743 \pm 0.018}$ & $\mathbf{0.616 \pm 0.039}$ & $\mathbf{0.547 \pm 0.032}$ & $\mathbf{0.543 \pm 0.032}$ \\
\bottomrule
\end{tabular}
}
\end{table}

\begin{table}[htbp]
\centering
\caption{Relative Macro Score change (\%) from $\times 1 \to \times 1000$. Positive values indicate degradation; negative values indicate improvement.}
\label{tab:degradation}
\begin{tabular}{l c c}
\toprule
Method & Scale Stress & Loss Rescaling \\
\midrule
Kendall & $32.3\%$ & $18.4\%$ \\
UWSO & $28.5\%$ & $\mathbf{-9.5\%}$ \\
BPGS & $\mathbf{27.0\%}$ & $0.0\%$ \\
\bottomrule
\end{tabular}
\end{table}

\begin{table}[htbp]
\centering
\caption{Heterogeneous mixed stress: Macro Score and Worst-Task Score across regimes. Mean $\pm$ std over 3 seeds.}
\label{tab:stress_heterogeneous}
\resizebox{\columnwidth}{!}{%
\begin{tabular}{l c c c c c c}
\toprule
Method & \multicolumn{2}{c}{Clean} & \multicolumn{2}{c}{Noisy} & \multicolumn{2}{c}{Conflict} \\
& Macro & Worst & Macro & Worst & Macro & Worst \\
\midrule
Kendall & $0.687 \pm 0.010$ & $0.267 \pm 0.056$ & $\mathbf{0.659 \pm 0.007}$ & $\mathbf{0.242 \pm 0.045}$ & $\mathbf{0.664 \pm 0.012}$ & $\mathbf{0.211 \pm 0.041}$ \\
UWSO & $0.437 \pm 0.028$ & $0.000 \pm 0.000$ & $0.438 \pm 0.020$ & $0.000 \pm 0.000$ & $0.433 \pm 0.013$ & $0.000 \pm 0.000$ \\
PCGrad & $\mathbf{0.688 \pm 0.016}$ & $0.274 \pm 0.055$ & $0.656 \pm 0.007$ & $0.241 \pm 0.046$ & $0.660 \pm 0.011$ & $0.205 \pm 0.048$ \\
BPGS & $0.679 \pm 0.017$ & $\mathbf{0.301 \pm 0.052}$ & $0.641 \pm 0.013$ & $0.176 \pm 0.076$ & $0.652 \pm 0.012$ & $0.198 \pm 0.063$ \\
\bottomrule
\end{tabular}
}
\end{table}
\begin{table}[htbp]
\centering
\caption{Normalization ablation on pure loss rescaling. Macro score (mean $\pm$ std over seeds 42, 123, and 999). L1-normalizing Kendall improves the largest-scale result but does not reproduce the BPGS scale sensitivity.}
\label{tab:norm_ablation}
\small
\begin{tabular}{lccc}
\toprule
Scale & BPGS & Kendall & Kendall+L1 \\
\midrule
$\times1$    & $0.789 \pm 0.018$ & $0.788 \pm 0.014$ & $0.794 \pm 0.017$ \\
$\times10$   & $0.791 \pm 0.015$ & $0.781 \pm 0.023$ & $0.780 \pm 0.022$ \\
$\times100$  & $0.788 \pm 0.016$ & $0.734 \pm 0.037$ & $0.742 \pm 0.029$ \\
$\times1000$ & $0.785 \pm 0.017$ & $0.658 \pm 0.042$ & $0.689 \pm 0.034$ \\
\midrule
$\Delta(\times1\!\to\!\times1000)$ & $-0.004$ & $-0.130$ & $-0.105$ \\
\bottomrule
\end{tabular}
\end{table}

\begin{figure*}[t]
\centering
\includegraphics[width=\textwidth]{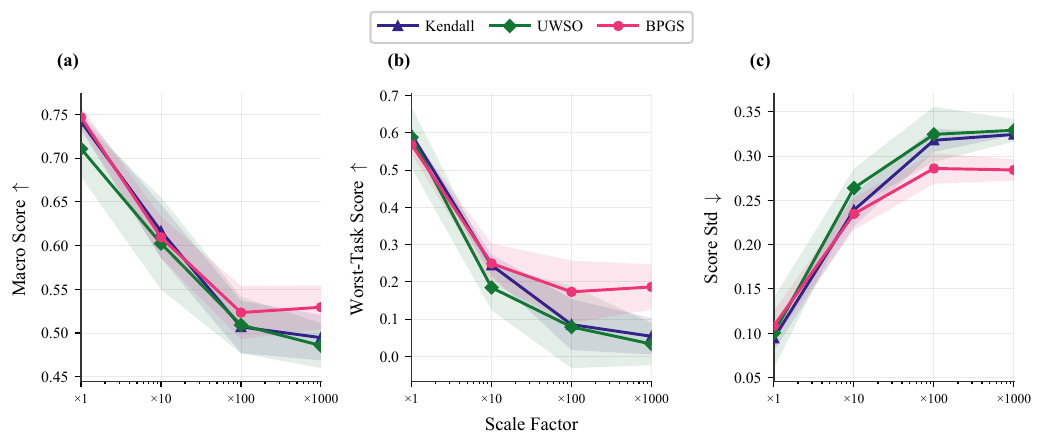}
\caption{Synthetic scale-stress curves. As the scale factor increases, all methods degrade, but BPGS retains the strongest macro score at every tested factor and tracks the strongest worst-task score at the larger stress levels.}
\label{fig:appendix_scale_stress}
\end{figure*}

\begin{figure*}[htbp]
\centering
\includegraphics[width=\textwidth]{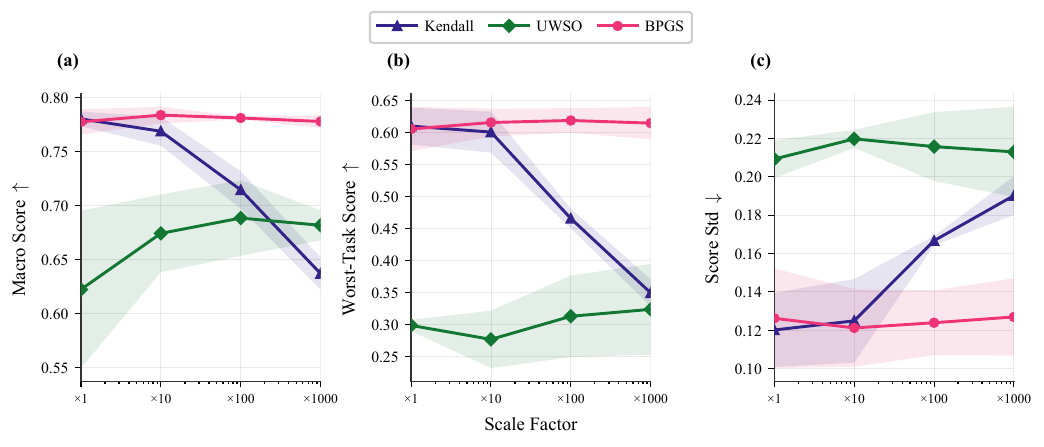}
\caption{Pure loss-rescaling curves. BPGS remains nearly invariant across the entire rescaling range, while Kendall degrades sharply at larger factors and UWSO improves only from a substantially weaker starting point.}
\label{fig:appendix_rescaling}
\end{figure*}

\begin{figure*}[htbp]
\centering
\includegraphics[width=\textwidth]{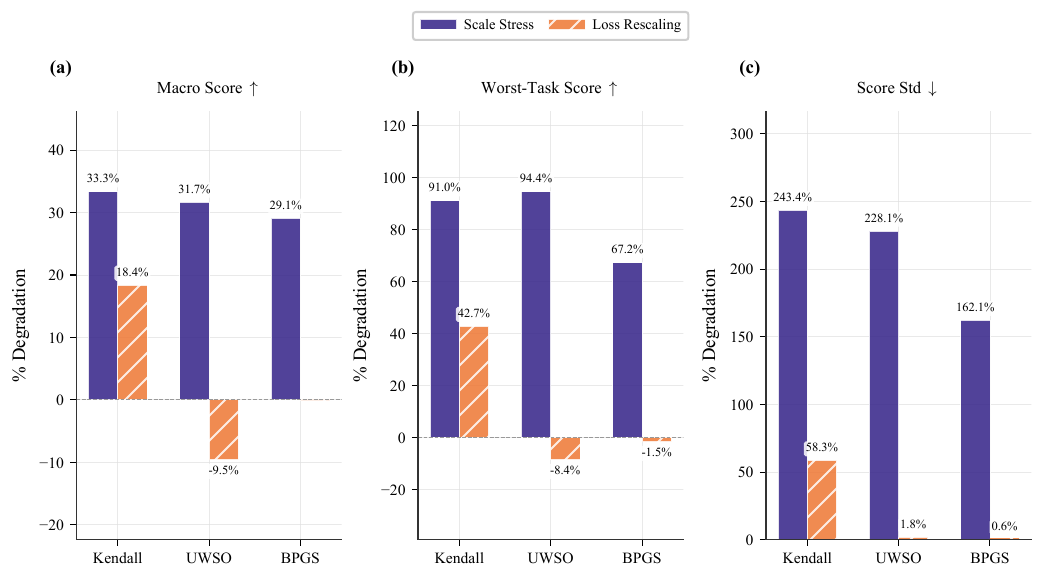}
\caption{Relative degradation comparison between synthetic scale stress and pure loss rescaling. The figure shows that BPGS has the smallest macro-score degradation under scale stress and essentially no macro-score degradation under pure rescaling.}
\label{fig:appendix_degradation}
\end{figure*}

\begin{figure*}[t]
\centering
\includegraphics[width=\textwidth]{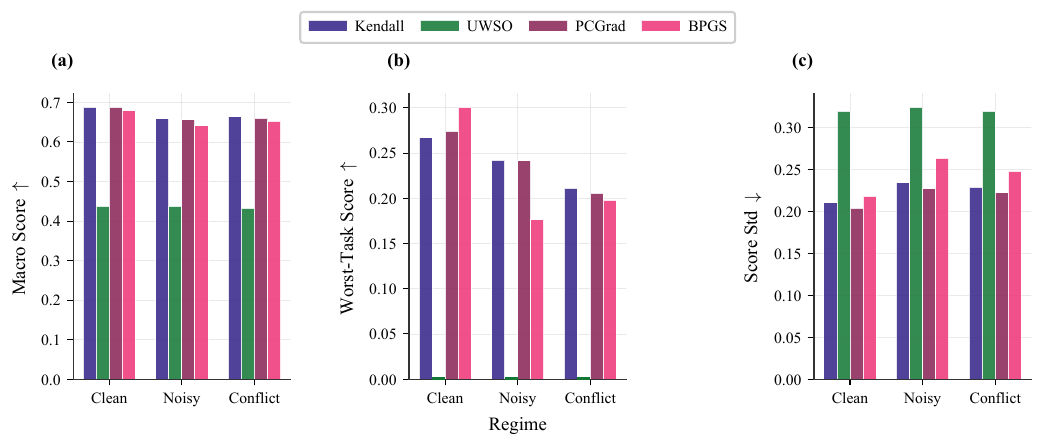}
\caption{Heterogeneous mixed-stress summary. BPGS is not the best method in every regime, but it achieves the strongest worst-task score in the clean regime and remains well above the degenerate UWSO worst-task score across all regimes.}
\label{fig:appendix_heterogeneous}
\end{figure*}

\section{Ablation and NYUv2 Figures}
Nash-MTL's per-step coefficients varied substantially across batches in all three runs. This
observation is descriptive only: the present logs do not distinguish solver variation from an
intrinsic property of the method on this benchmark.

\begin{figure*}[t]
\centering
\includegraphics[width=\textwidth]{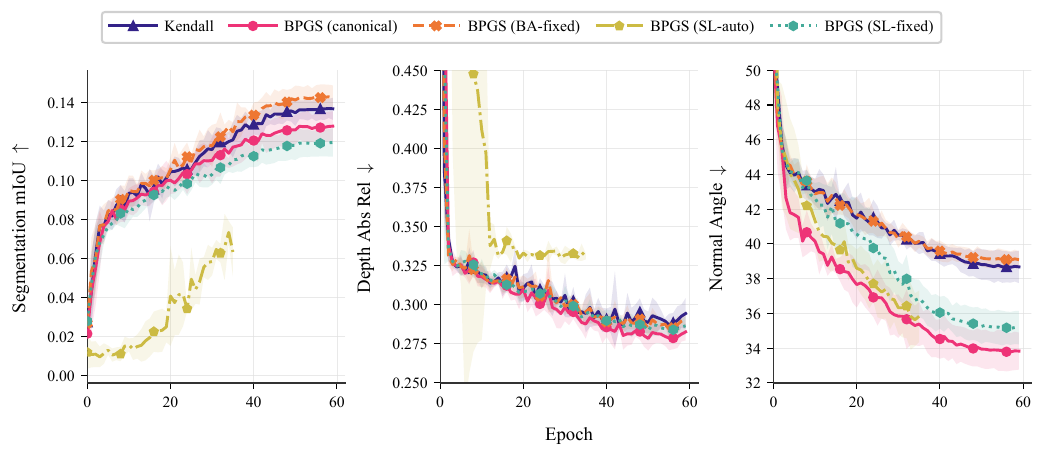}
\caption{NYUv2 ablation validation curves by task. The stateless auto-calibrated variant fails on segmentation and remains weak on depth, while the batch-aware variants maintain stronger trajectories throughout training.}
\label{fig:appendix_ablation_val_curves}
\end{figure*}

\begin{figure*}[t]
\centering
\includegraphics[width=\textwidth]{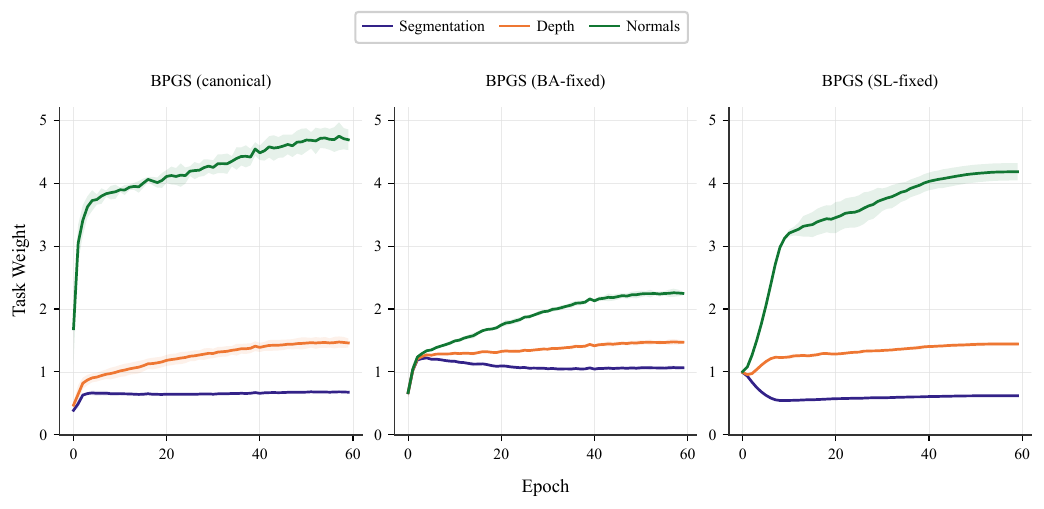}
\caption{Task-weight dynamics for representative stable BPGS variants on the NYUv2 ablation. The batch-aware variants keep task weights in distinct but controlled regimes, whereas the stateless fixed variant shows a different long-run allocation pattern, especially for the surface-normal task.}
\label{fig:appendix_ablation_weights}
\end{figure*}

\begin{figure*}[t]
\centering
\includegraphics[width=\textwidth]{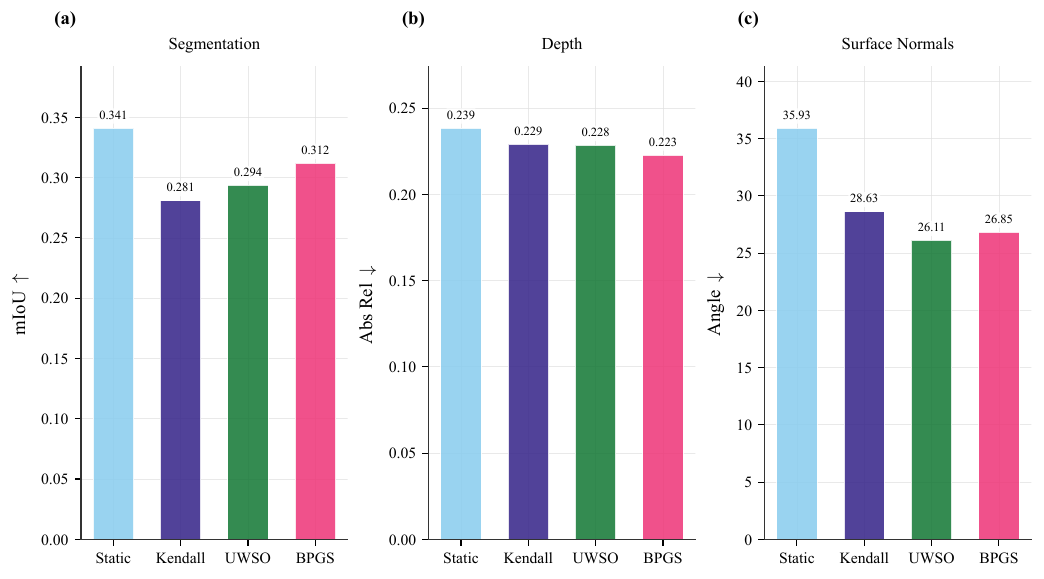}
\caption{NYUv2 per-task performance comparison. The figure makes the trade-off pattern visually explicit: Static is strongest on segmentation, UWSO is strongest on surface normals, and BPGS is strongest on both depth metrics.}
\label{fig:appendix_nyuv2_bars}
\end{figure*}

\begin{figure*}[t]
\centering
\includegraphics[width=\textwidth]{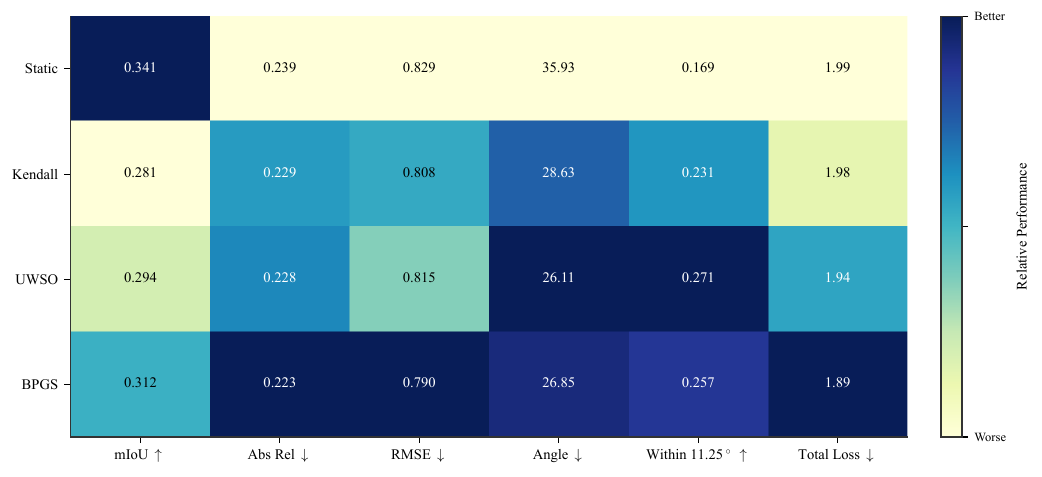}
\caption{NYUv2 performance heatmap across reported metrics. Darker cells indicate stronger relative performance after accounting for metric direction. BPGS concentrates its gains on depth and total loss, while Static and UWSO remain stronger on some other tasks.}
\label{fig:appendix_nyuv2_heatmap}
\end{figure*}

\begin{figure*}[t]
\centering
\includegraphics[width=\textwidth]{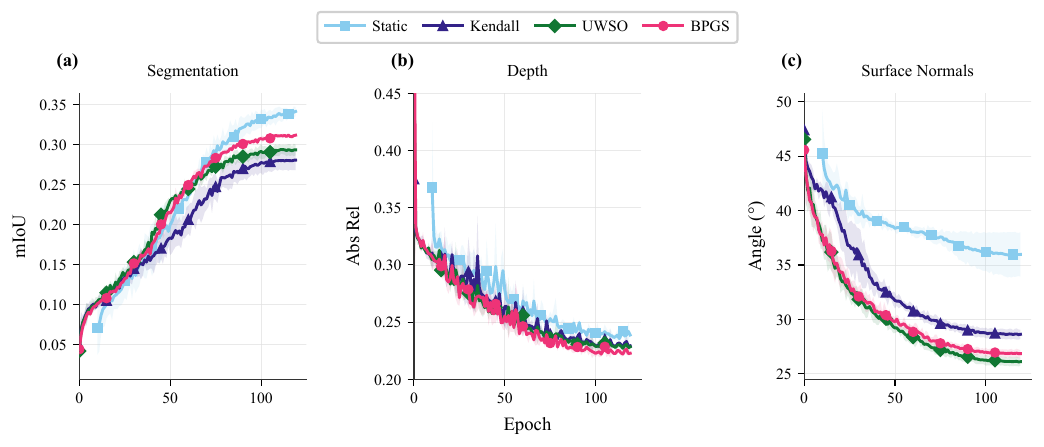}
\caption{NYUv2 validation training curves by task. BPGS tracks the strongest depth trajectory and remains competitive on segmentation and surface normals over the full 120-epoch budget.}
\label{fig:appendix_nyuv2_curves}
\end{figure*}

\begin{figure}[t]
\centering
\includegraphics[width=\columnwidth]{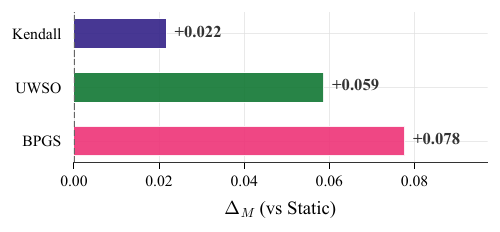}
\caption{NYUv2 $\Delta_M$ comparison against the Static baseline. BPGS attains the strongest aggregate improvement among the adaptive methods evaluated in the main benchmark.}
\label{fig:appendix_nyuv2_delta}
\end{figure}

\section{Real-Data Training Curves}

\begin{figure*}[t]
\centering
\includegraphics[width=\textwidth]{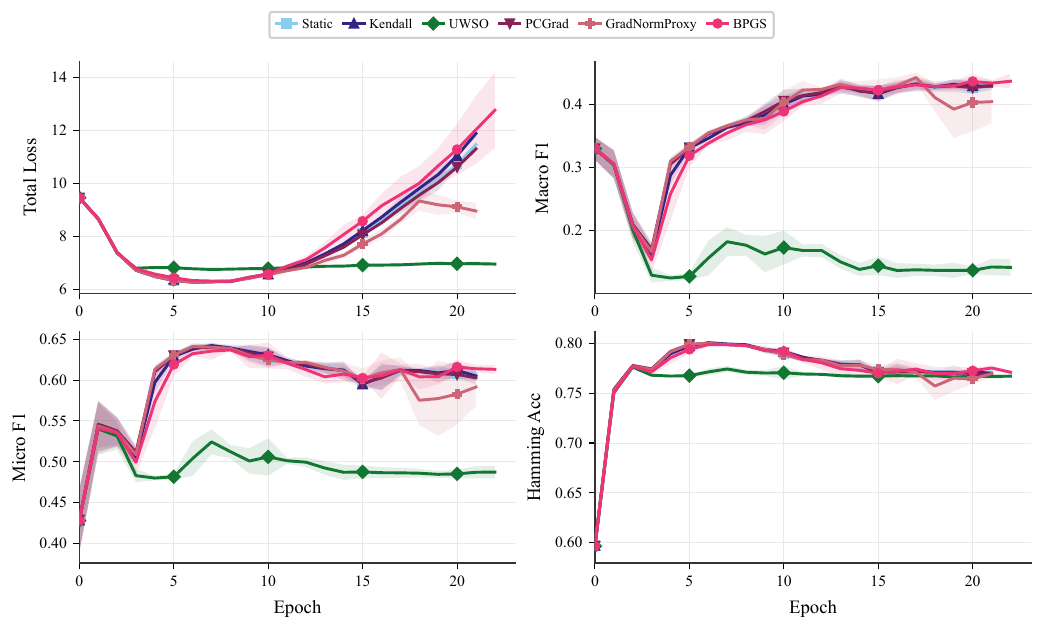}
\caption{Yeast training curves. BPGS tracks the strongest final micro-F1 and macro-F1 among the compared methods, while UWSO remains substantially weaker throughout training.}
\label{fig:appendix_yeast_curves}
\end{figure*}

\begin{figure*}[t]
\centering
\includegraphics[width=\textwidth]{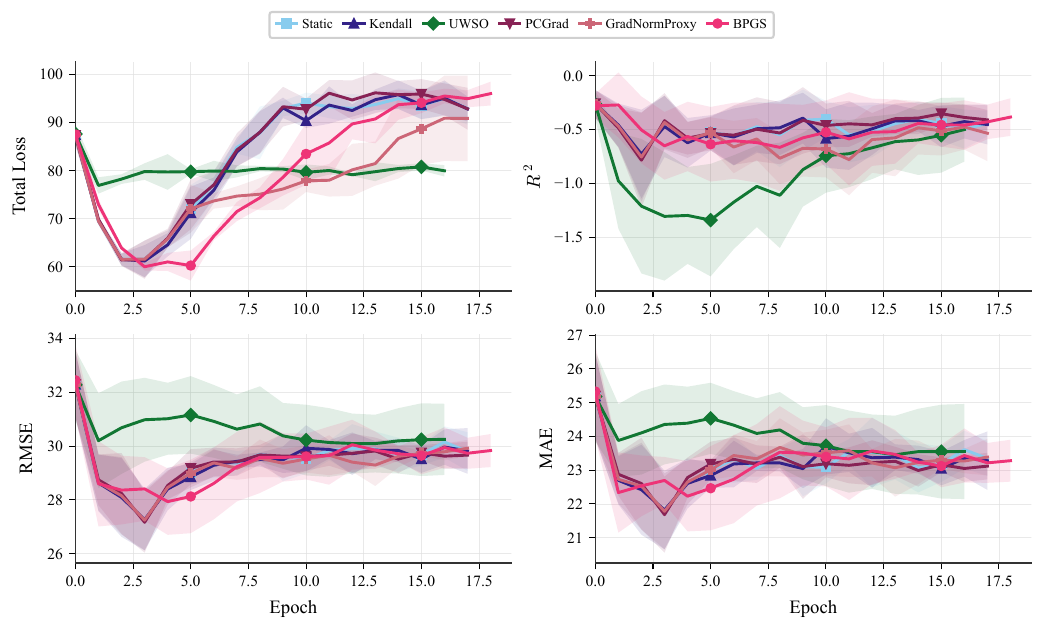}
\caption{RF1 training curves. The methods remain closer than in the synthetic stress studies, which is consistent with the paper's claim that RF1 supports competitiveness rather than clear superiority.}
\label{fig:appendix_rf1_curves}
\end{figure*}

\section{Sensitivity Analyses}
\label{app:sensitivity}
\begin{table}[t]
\centering
\caption{Stop-gradient ablation on the NYUv2 50\% subset (three seeds, 60 epochs). Removing the stop-gradient changes the uncertainty dynamics and lowers segmentation mIoU.}
\label{tab:stop_gradient}
\small
\begin{tabular}{lcc}
\toprule
Metric & Split on & Split off \\
\midrule
$\theta_{\max}$, epoch 0 & $3.082 \pm 0.017$ & $3.109 \pm 0.011$ \\
$\theta_{\max}$, final & $2.438 \pm 0.014$ & $3.993 \pm 0.192$ \\
Segmentation mIoU & $0.1983$ & $0.1897$ \\
Depth RMSE & $0.8433$ & $0.8544$ \\
Total loss & $2.1278$ & $2.1419$ \\
\bottomrule
\end{tabular}
\end{table}

\begin{table}[t]
\centering
\caption{Batch-size sensitivity of canonical BPGS on the NYUv2 50\% subset (three seeds per batch size, 60 epochs). Values are seed-averaged final metrics.}
\label{tab:batch_size}
\small
\begin{tabular}{lrrrr}
\toprule
Metric & bs=4 & bs=8 & bs=16 & CV \\
\midrule
Total loss & 2.1468 & 2.1226 & 2.1507 & 0.58\% \\
Depth Abs Rel & 0.2411 & 0.2421 & 0.2451 & 0.70\% \\
Depth RMSE & 0.8643 & 0.8403 & 0.8490 & 1.17\% \\
Normal angle & 29.400 & 29.223 & 29.754 & 0.75\% \\
Segmentation mIoU & 0.1955 & 0.1932 & 0.1810 & 3.35\% \\
$\theta_{\max}$, final & 2.396 & 2.441 & 2.589 & -- \\
\bottomrule
\end{tabular}
\end{table}

\begin{table}[t]
\centering
\caption{First-batch calibration sensitivity on the NYUv2 50\% subset. Only the loader seed changes; the model seed is fixed.}
\label{tab:first_batch}
\small
\begin{tabular}{lrrrr}
\toprule
Metric & 1001 & 2001 & 3001 & CV \\
\midrule
Total loss & 2.1145 & 2.1040 & 2.1090 & 0.20\% \\
Depth Abs Rel & 0.2385 & 0.2392 & 0.2411 & 0.46\% \\
Depth RMSE & 0.8348 & 0.8302 & 0.8325 & 0.23\% \\
Normal angle & 28.9205 & 28.8975 & 29.1069 & 0.32\% \\
Segmentation mIoU & 0.1912 & 0.1866 & 0.1952 & 1.84\% \\
$\theta_{\max}$, final & 2.4451 & 2.4382 & 2.4380 & 0.3\% spread \\
\bottomrule
\end{tabular}
\end{table}

The stop-gradient study changes only the coupling of the uncertainty update. The batch-size and
first-batch studies are controlled diagnostics on the NYUv2 subset, not claims about all dataset
sizes, batch regimes, or initialization procedures.

\section{Computational Overhead}
\label{app:overhead}
\begin{table}[t]
\centering
\caption{Computational overhead on the NYUv2 50\% subset (three seeds, 60 epochs).}
\label{tab:overhead}
\small
\begin{tabular}{lrrr}
\toprule
Quantity & BPGS & Kendall & Difference \\
\midrule
Per-epoch time (s) & 40.979 & 40.916 & +0.15\% \\
Peak GPU memory (MB) & 3368.36 & 3339.34 & +0.87\% \\
Total time, 60 epochs (s) & 2461.6 & 2457.9 & +3.7 \\
Trainable parameters & 18,869,143 & 18,869,140 & +3 \\
\bottomrule
\end{tabular}
\end{table}

The measurements report the controlled BPGS--Kendall comparison on a single GPU setting. They
quantify the added batch-statistics computation and uncertainty pass, but do not establish a
hardware-independent efficiency guarantee.

\section{Bounded-Chart Behavior}
\label{app:chart_behavior}
\begin{table}[t]
\centering
\caption{Closest observed approach to the bounded-chart radius across logged main-paper runs. The maximum occurs at initialization; no run reaches the boundary.}
\label{tab:saturation}
\small
\begin{tabular}{lrrr}
\toprule
Dataset & $T$ & max $|z_i|/\tau_T$ & final cap \\
\midrule
NYUv2 & 3 & 0.9327 & $\leq 0.84$ \\
Yeast & 14 & 0.7017 & $\leq 0.69$ \\
RF1 & 8 & 0.8937 & $\leq 0.89$ \\
\bottomrule
\end{tabular}
\end{table}

We reconstruct $z_i=(s_i-\mu)/\bar\varsigma$ from the logged batch statistics and log-variances.
At the closest observed point, $\sigma(\theta_i)(1-\sigma(\theta_i))\approx0.033$, about 13\%
of its maximum 0.25; by the final epochs the corresponding value is approximately 0.074. These
measurements show an empirical margin from saturation in the logged runs, not a general guarantee.

\clearpage

\end{document}